\newcommand{\ignore}[1]{}
\newcommand{\parheader}[1]{{\smallskip \noindent \bf #1.}}
\newcommand{\E}{\mathbb{E}}
\newtheorem{theorem}{Theorem}
\newcommand\dee{\text{d}}
\newcommand\II{\mathbb{I}}
\newcommand{\EE}{\mathbb{E}}
\DeclareMathOperator{\Bias}{Bias}
\DeclareMathOperator{\KS}{KS}
\renewcommand{\proofname}{Proof}
\title{Showing Your Work Doesn't Always Work}
\author{Raphael Tang,$^{1,2}$ Jaejun Lee,$^1$ Ji Xin,$^{1,2}$ Xinyu Liu,$^1$\\ {\bf Yaoliang Yu,$^{1,2}$ \and Jimmy Lin$^{1,2}$}\vspace{0.1cm}\\
  $^1$David R. Cheriton School of Computer Science, University of Waterloo\\
  $^2$Vector Institute for Artificial Intelligence
}
\date{}
\numberwithin{equation}{section}
\begin{document}
\maketitle
\begin{abstract}
In natural language processing, a recently popular line of work explores how to best report the experimental results of neural networks.
One exemplar publication, titled ``Show Your Work:~Improved Reporting of Experimental Results''~\cite{dodge2019show}, advocates for reporting the expected validation effectiveness of the best-tuned model, with respect to the computational budget.
In the present work, we critically examine this paper.
As far as statistical generalizability is concerned, we find unspoken pitfalls and caveats with this approach.
We analytically show that their estimator is biased and uses error-prone assumptions.
We find that the estimator favors negative errors and yields poor bootstrapped confidence intervals.
We derive an unbiased alternative and bolster our claims with empirical evidence from statistical simulation.
Our codebase is at \url{https://github.com/castorini/meanmax}.
\end{abstract}

\section{Introduction}
Questionable answers and irreproducible results represent a formidable beast in natural language processing research.
Worryingly, countless experimental papers lack empirical rigor, disregarding necessities such as the reporting of statistical significance tests~\cite{dror2018guide} and computational environments~\cite{crane2018questionable}.
As \citet{zosa2019scientific} concisely lament, \textit{explorimentation}, the act of tinkering with metaparameters and praying for success, while helpful in brainstorming, does not constitute a rigorous scientific effort.

Against the crashing wave of explorimentation, though, a few brave souls have resisted the urge to feed the beast.
\citet{reimers2017reporting} argue for the reporting of neural network score distributions.
\citet{gorman2019we} demonstrate that deterministic dataset splits yield less robust results than random ones for neural networks.
\citet{dodge2019show} advocate for reporting the expected validation quality as a function of the computation budget used for hyperparameter tuning, which is paramount to robust conclusions.

But carefully tread we must.
Papers that advocate for scientific rigor must be held to the very same standards that they espouse, lest they birth a new beast altogether.
In this work, we critically examine one such paper from \citet{dodge2019show}.
We acknowledge the validity of their technical contribution, but we find several notable caveats, as far as statistical generalizability is concerned.
Analytically, we show that their estimator is negatively biased and uses assumptions that are subject to large errors.
Based on our theoretical results, we hypothesize that this estimator strongly prefers underestimates to overestimates and yields poor confidence intervals with the common bootstrap method~\cite{efron1982jackknife}.

Our main contributions are as follows: First, we prove that their estimator is biased under weak conditions and provide an unbiased solution.
Second, we show that one of their core approximations often contains large errors, leading to poorly controlled bootstrapped confidence intervals.
Finally, we empirically confirm the practical hypothesis using the results of neural networks for document classification and sentiment analysis.

\section{Background and Related Work}

\parheader{Notation}
We describe our notation of fundamental concepts in probability theory.
First, the cumulative distribution function (CDF) of a random variable (RV) $X$ is defined as $F(x) := \Pr[X \leq x]$.
Given a sample $(x_1, \dots, x_B)$ drawn from $F$, the empirical CDF (ECDF) is then $\hat{F}_B(x) := \frac{1}{B}\sum_{i=1}^B \mathbb{I}[x_i \leq x]$,
where $\mathbb{I}$ denotes the indicator function.
Note that we pick ``$B$'' instead of ``$n$'' to be consistent with \citet{dodge2019show}.
The error of the ECDF is popularly characterized by the Kolmogorov--Smirnov (KS) distance between the ECDF and CDF:
\begin{equation}\label{eq:ks}
    \KS(\hat{F}_B, F) := \sup_{x\in \mathbb{R}} |\hat{F}_B(x) - F(x)|.
\end{equation}
Naturally, by definition of the CDF and ECDF, $\KS(\hat{F}_B, F) \leq 1$.
Using the CDF, the expectation for both discrete and continuous (cts.) RVs is
\begin{equation}
    \E[X] = \int^\infty_{-\infty} x \dee F(x),
\end{equation}
defined using the Riemann--Stieltjes integral.

We write the $i^\text{th}$ order statistic of independent and identically distributed (i.i.d.) $X_1, \dots, X_B$ as $X_{(i:B)}$.
Recall that the $i^\text{th}$ order statistic $X_{(i:B)}$ is an RV representing the $i^\text{th}$ smallest value if the RVs were sorted.

\parheader{Hyperparameter tuning}
In random search, a probability distribution $p(\mathcal{H})$ is first defined over a $k$-tuple hyperparameter configuration $\mathcal{H} := (H_1, \dots, H_k)$, which can include both cts.~and discrete variables, such as the learning rate and random seed of the experimental environment.
Commonly, researchers choose the uniform distribution over a bounded support for each hyperparameter~\cite{bergstra2012random}.
Combined with the appropriate model family $\mathcal{M}$ and dataset $\mathcal{D} := (\mathcal{D}_T, \mathcal{D}_V)$---split into training and validation sets, respectively---a configuration then yields a numeric score $V$ on $\mathcal{D}_V$.
Finally, after sampling $B$ i.i.d. configurations, we obtain the scores $V_1, \dots, V_B$ and pick the hyperparameter configuration associated with the best one.

\section{Analysis of Showing Your Work}
In ``Show Your Work: Improved Reporting of Experimental Results,'' \citet{dodge2019show} realize the ramifications of underreporting the hyperparameter tuning policy and its associated budget.
One of their key findings is that, given different computation quotas for hyperparameter tuning, researchers may arrive at drastically different conclusions for the same model.
Given a small tuning budget, a researcher may conclude that a smaller model outperforms a bigger one, while they may reach the opposite conclusion for a larger budget.

To ameliorate this issue, \citet{dodge2019show} argue for fully reporting the expected maximum of the score as a function of the budget.
Concretely, the parameters of interest are $\theta_1, \dots, \theta_B$, where $\theta_n := \E\left[\max\{V_1, \dots, V_n\}\right] = \E[V_{(n:n)}]$
for $1 \leq n \leq B$.
In other words, $\theta_n$ is precisely the expected value of the $n^\text{th}$ order statistic for a sample of size $n$ drawn i.i.d. at tuning time.
For this quantity, they propose an estimator, derived as follows: first, observe that the CDF of $V^*_n = V_{(n:n)}$ is
\begin{align}
    \Pr[V^*_n \leq v] &= \Pr[V_1 \leq v \wedge \cdots \wedge V_n \leq v]\\
    &= \Pr[V \leq v]^n \label{eq:vleqvk},
\end{align}
which we denote as $F^n(v)$. Then
\begin{equation}
    \theta_{n} = \E[V_{(n:n)}] = \int_{-\infty}^{\infty} v \dee F^n(v).
\end{equation}
For approximating the CDF, \citet{dodge2019show} use the ECDF $\hat{F}^n_B(v)$, constructed from some sample $S := (v_1, \dots, v_B)$, i.e.,
\begin{equation}\label{eq:ecdfk}
    \hat{F}^n_B(v) = \left(\hat{F}_B(v)\right)^n = \left(\frac{1}{B} \sum_{i=1}^B \II[v_i \leq v]\right)^n.
\end{equation}
The first identity in Eq.~\eqref{eq:ecdfk} is clear from Eq.~\eqref{eq:vleqvk}.
Without loss of generality, assume $v_1 \leq \cdots \leq v_B$.
To construct an estimator $\hat{\theta}_n$ for $\theta_n$, \citet{dodge2019show} then replace the CDF with the ECDF:
\begin{equation}
    \hat{\theta}_n := \int_{-\infty}^{\infty} v\dee \hat{F}^n_B(v),
\end{equation}
which, by definition, evaluates to
\begin{equation}
    \hat{\theta}_n = \sum_{i=1}^B v_i \left(\hat{F}^n_B(v_i) - \hat{F}^n_B(v_{i-1})\right),
\end{equation}
where, with some abuse of notation, $v_0 < v_1$ is a dummy variable and  $\hat{F}^n_B(v_0) := 0$.
We henceforth refer to $\hat{\theta}_n$ as the \textbf{MeanMax} estimator.
\citet{dodge2019show} recommend plotting the number of trials on the $x$-axis and $\hat{\theta}_n$ on the $y$-axis.

\subsection{Pitfalls and Caveats}

We find two unspoken caveats in \citet{dodge2019show}:~first, the MeanMax estimator is statistically biased, under weak conditions.
Second, the ECDF, as formulated, is a poor drop-in replacement for the true CDF, in the sense that the finite sample error can be unacceptable if certain, realistic conditions are unmet.

\parheader{Estimator bias}
The bias of an estimator $\hat{\theta}$ is defined as the difference between its expectation and its estimand $\theta$: $\Bias(\hat{\theta}) := \E[\hat{\theta}] - \theta$.
An estimator is said to be \textit{unbiased} if its bias is zero; otherwise, it is \textit{biased}.
We make the following claim:

\begin{theorem}\label{thm:bias}
Let $V_1, \dots, V_B$ be an i.i.d. sample (of size $B$) from an unknown distribution $F$ on the real line.
Then, for all $1 \leq n \leq B$,  $\Bias(\hat{\theta}_n) \leq 0$, with strict inequality iff $V_{(1)} < V_{(n)}$ with nonzero probability.
In particular, if $n=1$, then $\Bias(\hat{\theta}_1) = 0$ while if $n > 1$ with $F$ continuous or discrete but non-degenerate, then $\Bias(\hat{\theta}_n) < 0$.
\end{theorem}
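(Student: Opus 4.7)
The plan is to rewrite the bias as a single Lebesgue integral of a difference of CDFs, and then reduce the whole claim to Jensen's inequality applied pointwise to the convex map $x\mapsto x^n$. Using the tail-integral representation of the expectation (which is finite and valid since validation scores are bounded) and swapping the outer expectation through the integral by Fubini, I would obtain
\begin{equation*}
\E[\hat\theta_n] - \theta_n \;=\; \int \bigl(F^n(v) - \E[\hat F^n_B(v)]\bigr)\, dv.
\end{equation*}
This reduces the theorem to a purely pointwise comparison of two CDFs, $F^n$ and $\E[\hat F^n_B]$.

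Next, I would use that the ECDF is unbiased, $\E[\hat F_B(v)] = F(v)$, so $F^n(v) = (\E[\hat F_B(v)])^n$. Since $x\mapsto x^n$ is convex, Jensen's inequality applied to the random variable $\hat F_B(v)$ yields
\begin{equation*}
F^n(v) \;=\; (\E[\hat F_B(v)])^n \;\leq\; \E\bigl[\hat F_B(v)^n\bigr] \;=\; \E[\hat F^n_B(v)]
\end{equation*}
pointwise, so the integrand above is non-positive and $\Bias(\hat\theta_n) \leq 0$ follows immediately. The case $n=1$ is separate but trivial: $x\mapsto x$ is linear so Jensen is equality, and in fact $\hat\theta_1$ collapses to the sample mean $\bar V$, which is unbiased for $\theta_1 = \E[V]$.

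For strict bias when $n \geq 2$: since $x\mapsto x^n$ is then strictly convex on $[0,1]$, Jensen's is strict exactly when $\hat F_B(v)$ is non-degenerate, i.e., when $0 < F(v) < 1$ (as $B\hat F_B(v)\sim\mathrm{Binomial}(B, F(v))$). The hypothesis $\Pr[V_{(1:n)} < V_{(n:n)}] > 0$ is equivalent to $F$ being non-degenerate (not a point mass), whether $F$ is continuous or discrete with at least two atoms; in either case, the set $\{v : 0 < F(v) < 1\}$ contains an open interval of positive Lebesgue measure, on which the integrand is strictly negative, so the integral itself is strictly negative. Conversely, if $F$ is a point mass, $\hat\theta_n = \theta_n$ almost surely, so the bias is zero, and the order-statistic condition vacuously fails.

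The only obstacle I anticipate is technical: justifying Fubini and the tail-integral step cleanly for the admissible class of distributions (bounded suffices, and this already covers the practical case of scores in $[0,1]$), and checking that $\{v : 0 < F(v) < 1\}$ contains an interval of positive Lebesgue measure whenever $F$ is non-degenerate. Both are routine; once they are in hand, the core of the argument is essentially a two-line application of Jensen's inequality to the ECDF.
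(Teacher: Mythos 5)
Your proof is correct, but it takes a genuinely different route from the paper's. The paper constructs an explicit unbiased competitor, the $U$-statistic $\hat U_n^B = \binom{B}{n}^{-1}\sum_{i_1<\cdots<i_n}\max\{V_{i_1},\ldots,V_{i_n}\}$, identifies the MeanMax estimator with the corresponding $V$-statistic $\hat V_n^B$ (averaging over multisets with replacement), and proves the \emph{almost-sure pointwise} domination $\hat V_n^B \le \hat U_n^B$ by comparing cumulative sums of the coefficients placed on the order statistics (a majorization argument), with strictness exactly on the event $V_{(1)}<V_{(n)}$; the bias claim follows by taking expectations. You instead pass to the tail-integral representation, write $\Bias(\hat\theta_n)=\int\bigl(F^n(v)-\E[\hat F_B^n(v)]\bigr)\,dv$, and apply Jensen's inequality to $x\mapsto x^n$ using $\E[\hat F_B(v)]=F(v)$ and $B\hat F_B(v)\sim\mathrm{Binomial}(B,F(v))$. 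This is shorter and more transparent for the sign of the bias, and the technical caveats you flag do go through: the bound $\E\lvert F(v)-\hat F_B(v)\rvert\le 2\min\bigl(F(v),1-F(v)\bigr)$ justifies Fubini whenever $\E\lvert V\rvert<\infty$ (boundedness is not needed), and non-degeneracy of $F$ forces $\{v:0<F(v)<1\}$ to contain a nondegenerate interval by right-continuity of $F$, which for $n\ge 2$ is equivalent to $\Pr[V_{(1)}<V_{(n)}]>0$, so your strictness characterization matches the theorem's. What your argument does not buy is the unbiased alternative: the $U$-statistic $\hat U_n^B$ is itself one of the paper's advertised contributions, and the realization-wise inequality $\hat V_n^B\le\hat U_n^B$ is strictly stronger than a comparison of expectations. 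The two proofs are thus complementary: yours isolates the bias as a pure convexity phenomenon in the ECDF, while the paper's additionally delivers the corrected estimator.
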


\begin{proof}

Let $1 < n \leq B$. We are interested in estimating the expectation of the maximum of the $n$ i.i.d. samples:
\begin{align*}
    \theta_n := \EE [V_{n:n}] = \EE[\max\{V_1, \ldots, V_n\}].
\end{align*}
An obvious unbiased estimator, based on the given sample of size $B$, is the following:
\begin{align*}
    \hat U_n^B := \frac{1}{ {B \choose n} } \sum_{1\leq i_1 < i_2 < \cdots  < i_n  \leq B} \max\{V_{i_1}, \ldots, V_{i_n}\}.
\end{align*}
This estimator is obviously unbiased since 
\begin{align*}
    \EE[\hat U_n^B] = \EE[\max\{V_{i_1}, \ldots, V_{i_n}\}] = \theta_n,
\end{align*}
due to the i.i.d. assumption on the sample. 

A second, biased estimator is the following:
\begin{align}\label{eq:Vn1}
    \hat V_n^B := \frac{1}{ B^n } \sum_{1\leq i_1 \leq i_2 \leq \cdots \leq i_n \leq B} \max\{V_{i_1}, \ldots, V_{i_n}\}.
\end{align}
This estimator is only asymptotically unbiased when $n$ is fixed while $B$ tends to $\infty$. In fact, we will prove below that for all $1 \leq n \leq B$:
\begin{align}
\label{eq:UdV}
    \hat V_n^B \leq \hat U_n^B,
\end{align}
with strict inequality iff $V_{(1)} < V_{(n)}$, where $V_{(i)} = V_{(i:B)}$ is defined as the $i^\text{th}$ smallest order statistic of the sample. 
We start with  simplifying the calculation of the two estimators. 
It is easy to see that the following holds:
\begin{align*}
    \hat U_n^B = \sum_{j=1}^B \frac{ {j-1 \choose n-1} }{ {B \choose n} } V_{(j)},
\end{align*}
where we basically enumerate all possibilities for $\max\{V_{i_1}, \ldots, V_{i_n}\} = V_{(j)}$. By convention, ${m \choose n} = 0$ if $m < n$ so the above summation effectively goes from $k$ to $B$, but our convention will make it more convenient for comparison. Similarly,
\begin{align*}
    \hat V_n^B = \sum_{j=1}^B \frac{ j^n - (j-1)^n }{ B^n } V_{(j)}.
\end{align*}

We make an important observation that connects our estimators to that of Dodge et al. Let $\hat F_B(x) = \frac{1}{B} \sum_{i=1}^B \II[V_i \leq x]$ be the empirical distribution of the sample. Then, the plug-in estimator, where we replace $F$ with $\hat F_B$, is
\begin{align*}
    \hat\theta_n^B &= \hat\EE[\max\{\hat V_1, \ldots, \hat V_n\}], \quad \mbox{ where } \quad \hat V_i \stackrel{iid}{\sim} \hat F_B \\
    &= \sum_{j=1}^B [\hat F_B^n(V_{(j)}) - \hat F_B^n(V_{(j-1)})] V_{(j)}
    = \hat V_n^B
    ,
\end{align*}
since $\hat F_B^n(V_{(j)}) = (j/B)^n$ if there are no ties in the sample. The formula continues to hold even if there are ties, in which case we simply collapse the ties, using the fact that $\sum_{j=i}^k \hat F_B^n(V_{(j)}) - \hat F_B^n(V_{(j-1)}) = \hat F_B^n(V_{(k)}) - \hat F_B^n(V_{(i-1)})$ when $V_{(i-1)} < V_{(i)} = V_{(i+1)} = \cdots = V_{(k)} < V_{(k+1)}.$

Now, we are ready to prove Eq.~\eqref{eq:UdV}. All we need to do is to compare the cumulative sums of the coefficients in the two estimators:
\begin{align*}
    \sum_{j=1}^k \frac{ {j-1 \choose n-1} }{ {B \choose n} } = \frac{ {k \choose n} }{ {B \choose n} }, \quad 
    \sum_{j=1}^k \frac{ j^n - (j-1)^n }{ B^n } = \frac{k^n}{B^n}.
\end{align*}
We need only consider $k \geq n$ (the case $k< n$ is trivial). One can easily verify the following expression backwards:
\begin{align*}
    \frac{ {k \choose n} }{ {B \choose n} } < \frac{k^n}{B^n} &\iff \frac{{k \choose n}}{k^n} < \frac{ {B \choose n} }{ B^n }\\ &\iff \prod_{i=0}^{n-1} (1-\frac{i}{k}) < \prod_{i=0}^{n-1} (1-\frac{i}{B}),
\end{align*}
where the last inequality follows from $k < B$ and $n > 1$.
Thus, we have verified the following for all $1 \leq k < B$:
\begin{align*}
    \sum_{j=1}^k \frac{ {j-1 \choose n-1} }{ {B \choose n} } < \sum_{j=1}^k \frac{ j^n - (j-1)^n }{ B^n }.
\end{align*}
Eq.~\eqref{eq:UdV} now follows since $V_{(1)} < \cdots < V_{(B)}$ lies in the isotonic cone while we have proved the difference of the two coefficients lies in the dual cone of the isotonic cone. 
An elementary way to see this is to first compare the coefficients in front of $V_{(B)}$: clearly, $\hat U_n^B$'s is larger since it has smaller sum of all coefficients (but the one in front of $V_{(B)}$; take $k=B-1$) whereas the total sum is always one.
Repeat this comparison for $V_{(1)}, \dots, V_{(B-1)}$.

Lastly, if $V_{(1)} < V_{(n)}$, then there exists a subset (with repetition) $1 \leq i_1 \leq \ldots \leq i_n \leq n $ such that $\max\{ V_{(i_1)}, \ldots, V_{(i_n)} \} < V_{(n)}$. For instance, setting $i_1 = \ldots = i_n = 1$ would suffice. Since $\hat V_n^B$ puts positive mass on every subset of $n$ elements (with repetitions allowed), the strict inequality follows.
We note that if $F$ is continuous, or if $F$ is discrete but non-degenerate, then $V_{(1)} < V_{(n)}$ with nonzero probability, hence
\begin{align*}
    \Bias(\hat\theta_n) = \EE(\hat V_n^B - \hat U_n^B) < 0.
\end{align*}
The proof is now complete.
\end{proof}

For further caveats, see Appendix A.
The practical implication is that researchers may falsely conclude, on average, that a method is worse than it is, since the MeanMax estimator is negatively biased.
In the context of environmental consciousness~\cite{schwartz2019green}, more computation than necessary is used to make a conclusion.
\renewcommand{\proofname}{Proof}

\parheader{ECDF error}
The finite sample error (Eq.~\ref{eq:ks}) of approximating the CDF with the ECDF (Eq.~\ref{eq:ecdfk}) can become unacceptable as $n$ increases:

\begin{theorem}
If the sample does not contain the population maximum, $\KS(\hat{F}^n_B, F^n) \rightarrow 1$ exponentially quickly as $n$ and $B$ increase.
\end{theorem}

\begin{proof}
See Appendix B.
\end{proof}

Notably, this result always holds for cts.~distributions, since the population maximum is never in the sample.
Practically, this theorem suggests the failure of bootstrapping~\cite{efron1982jackknife} for statistical hypothesis testing and constructing confidence intervals (CIs) of the expected maximum, since the bootstrap requires a good approximation of the CDF~\cite{canty2006bootstrap}.
Thus, relying on the bootstrap method for constructing confidence intervals of the expected maximum, as in \citet{lucic2018gans}, may lead to poor coverage of the true parameter.

\section{Experiments}

\subsection{Experimental Setup}
To support the validity of our conclusions, we opt for cleanroom Monte Carlo simulations, which enable us to determine the true parameter and draw millions of samples.
To maintain the realism of our study, we apply kernel density estimation to actual results, using the resulting probability density (or discretized mass) function as the ground truth distribution.
Specifically, we examine the experimental results of the following neural networks:

\parheader{Document classification}
We first conduct hyperparameter search over neural networks for document classification, namely a multilayer perceptron (MLP) and a long short-term memory (LSTM; \citealp{hochreiter1997long}) model representing state of the art (for LSTMs) from~\citet{adhikari2019rethinking}.
For our dataset and evaluation metric, we choose Reuters~\cite{apte1994automated} and the F$_1$ score, respectively.
Next, we fit discretized kernel density estimators to the results---see the appendix for experimental details.
We name the distributions after their models, MLP and LSTM.

\parheader{Sentiment analysis}
Similar to \citet{dodge2019show}, on the task of sentiment analysis, we tune the hyperparameters of two LSTMs---one ingesting embeddings from language models (ELMo; \citealp{peters2018deep}), the other shallow word vectors~(GloVe; \citealp{pennington2014glove}).
We choose the binary Stanford Sentiment Treebank~\cite{socher2013recursive} dataset and apply the same kernel density estimation method.
We denote the distributions by their embedding types, GloVe and ELMo.

\subsection{Experimental Test Battery}

\parheader{False conclusion probing}
To assess the impact of the estimator bias, we measure the probability of researchers falsely concluding that one method underperforms its true value for a given $n$.
The \mbox{\textit{unbiased}} estimator has an expectation of $0.5$, preferring neither underestimates nor overestimates.

Concretely, denote the true $n$-run expected maxima of the method as $\theta_n$ and the estimator as $\hat{\theta}_n$.
We iterate $n=1,\dots,50$ and report the proportion of samples (of size $B=50$) where $\hat{\theta}_n < \theta_n$. We compute the true parameter using 1,000,000 iterations of Monte Carlo simulation and estimate the proportion with 5,000 samples for each $n$.

\parheader{CI coverage}
To evaluate the validity of bootstrapping the expected maximum, we measure the coverage probability of CIs constructed using the percentile bootstrap method~\cite{efron1982jackknife}.
Specifically, we set $B=50$ and iterate $n = 1, \dots, 50$.
For each $n$, across $M=1000$ samples, we compare the empirical coverage probability (ECP) to the nominal coverage rate of 95$\%$, with CIs constructed using $5,000$ bootstrapped resamples.
The ECP $\hat{\alpha}_n$ is computed as
\begin{equation}
    \hat{\alpha}_n := \frac{1}{M} \sum_{i=1}^M \mathbb{I}\left(\theta_n \in \text{CI}_i\right),
\end{equation}
where CI$_i$ is the CI of the $i^\text{th}$ sample.
\subsection{Results}
\begin{figure}[t]
    \centering
    \includegraphics[scale=0.26,trim={0.45cm 0 0.45cm 0},clip]{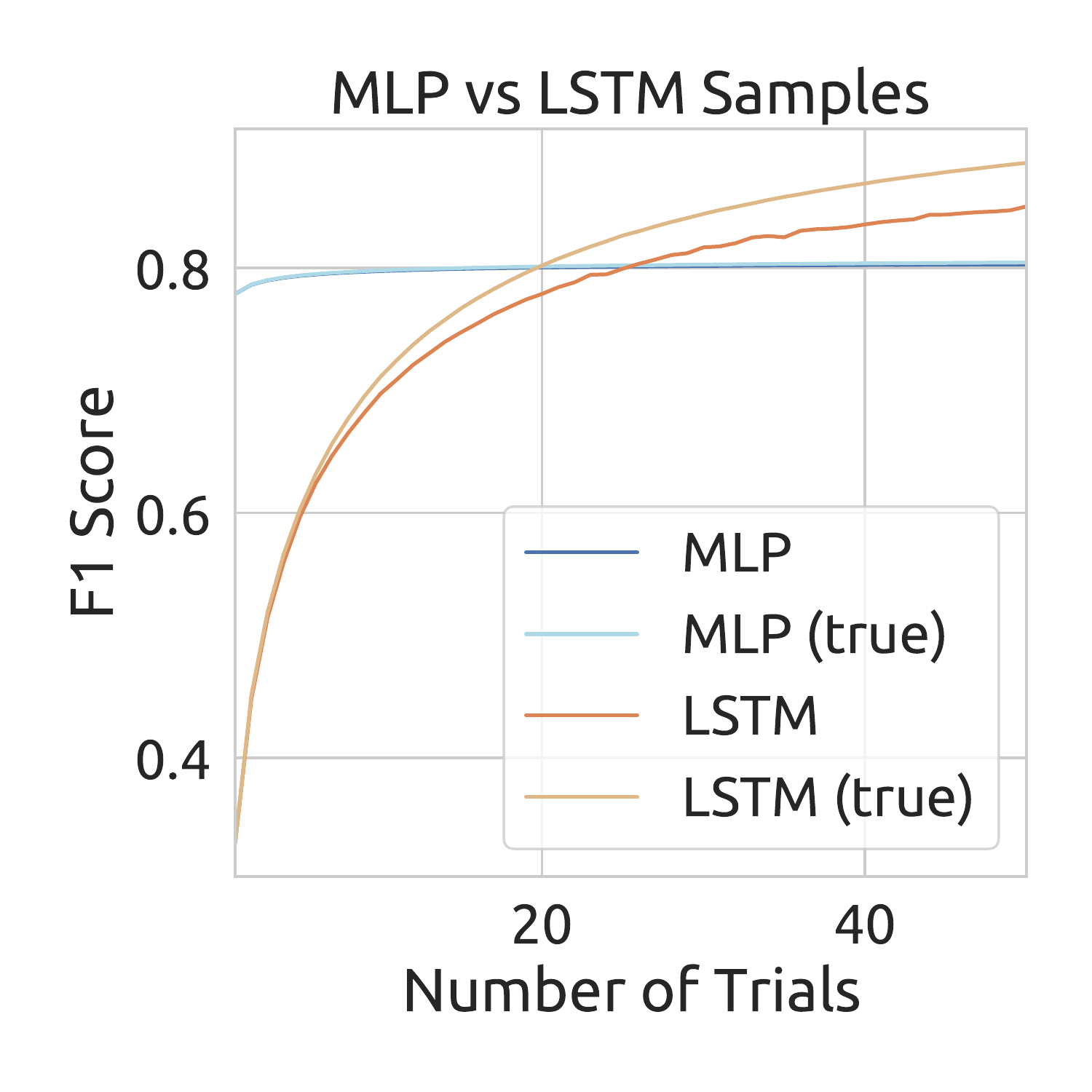}
    \includegraphics[scale=0.26,trim={0.45cm 0 0.45cm 0},clip]{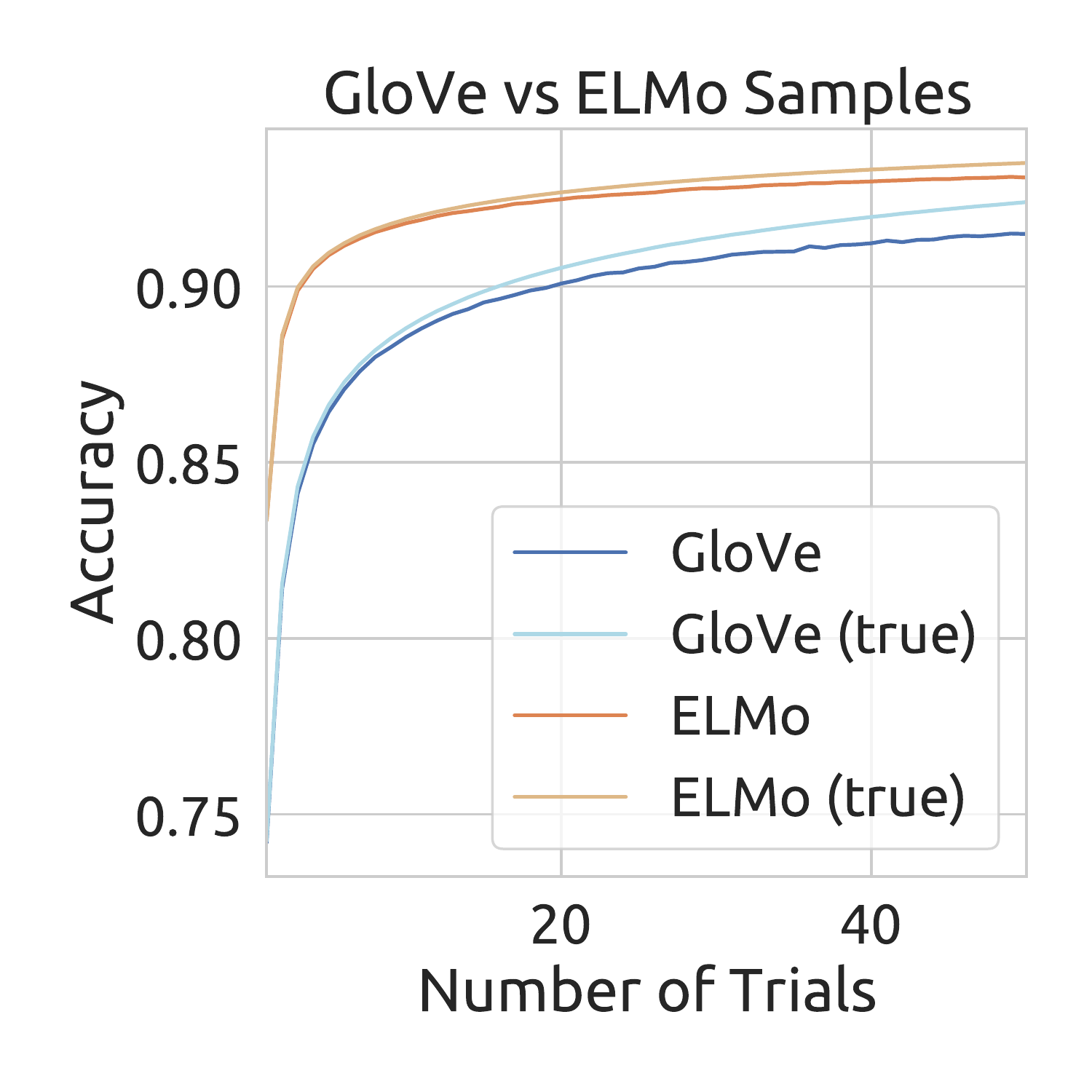}
    \caption{The estimated budget--quality curves, along with the true curves.}
    \label{fig:curve}
\end{figure}
\begin{figure}[t]
    \centering
    \includegraphics[scale=0.26,trim={0.45cm 0 0.45cm 0},clip]{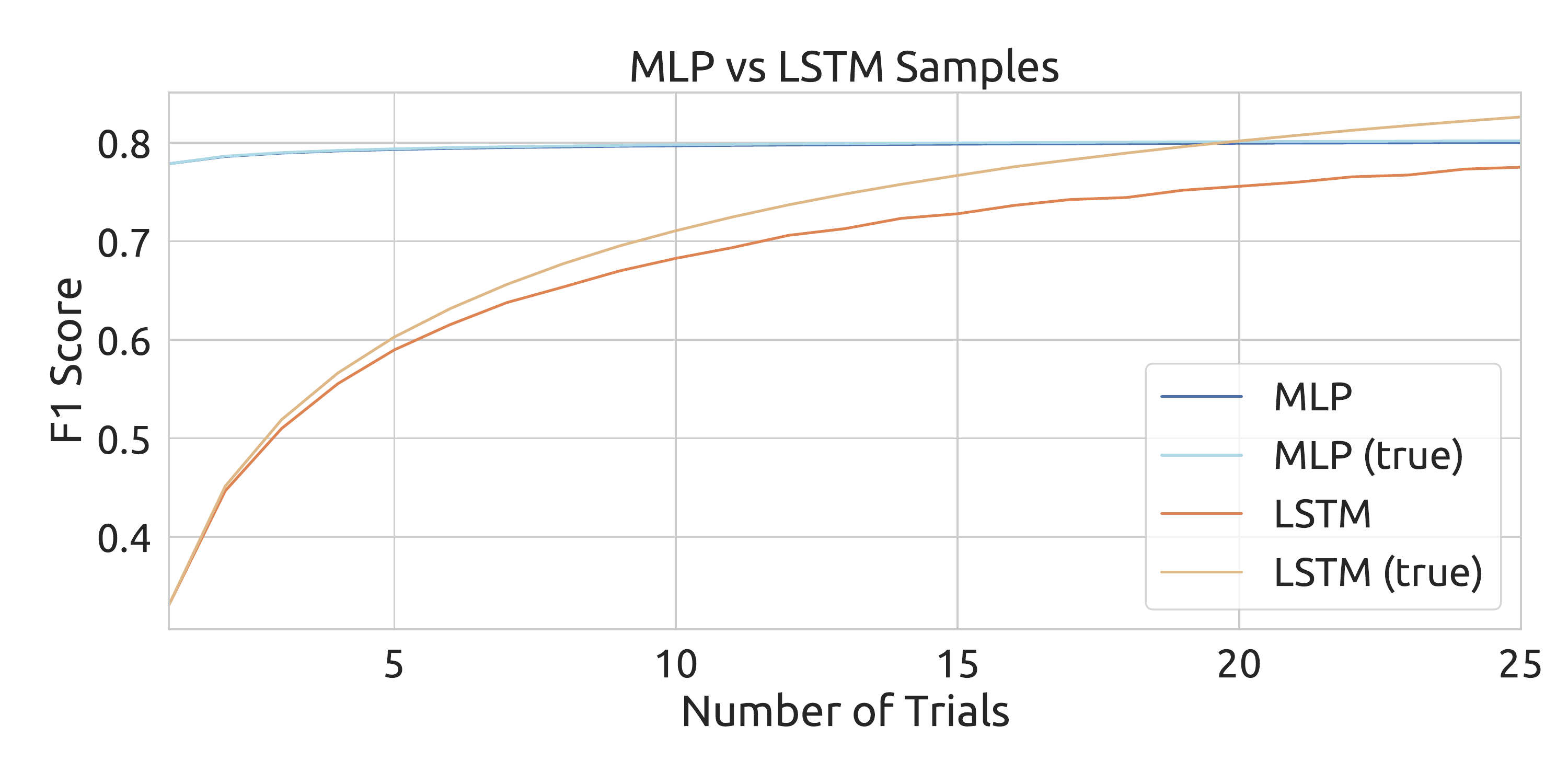}
    \caption{Illustration of a failure case with $B=25$.}
    \label{fig:failcase}
\end{figure}
Following \citet{dodge2019show}, we present the budget--quality curves for each model pair in Figure~\ref{fig:curve}.
For each $n$ number of trials, we vertically average each curve across the 5,000 samples.
We construct CIs but do not display them, since the estimate is precise (standard error $< 0.001$).
For document classification, we observe that the LSTM is more difficult to tune but achieves higher quality after some effort.
For sentiment analysis, using ELMo consistently attains better accuracy with the same number of trials---we do not consider the wall clock time.

In Figure~\ref{fig:failcase}, we show a failure case of biased estimation in the document classification task.
At $B=25$, from $n=20$ to $25$, the averaged estimate yields the wrong conclusion that the MLP outperforms the LSTM---see the true LSTM line, which is above the true MLP line, compared to its estimate, which is below.

\begin{figure}[t]
    \centering
    \includegraphics[scale=0.25,trim={0.45cm 0 0.45cm 0},clip]{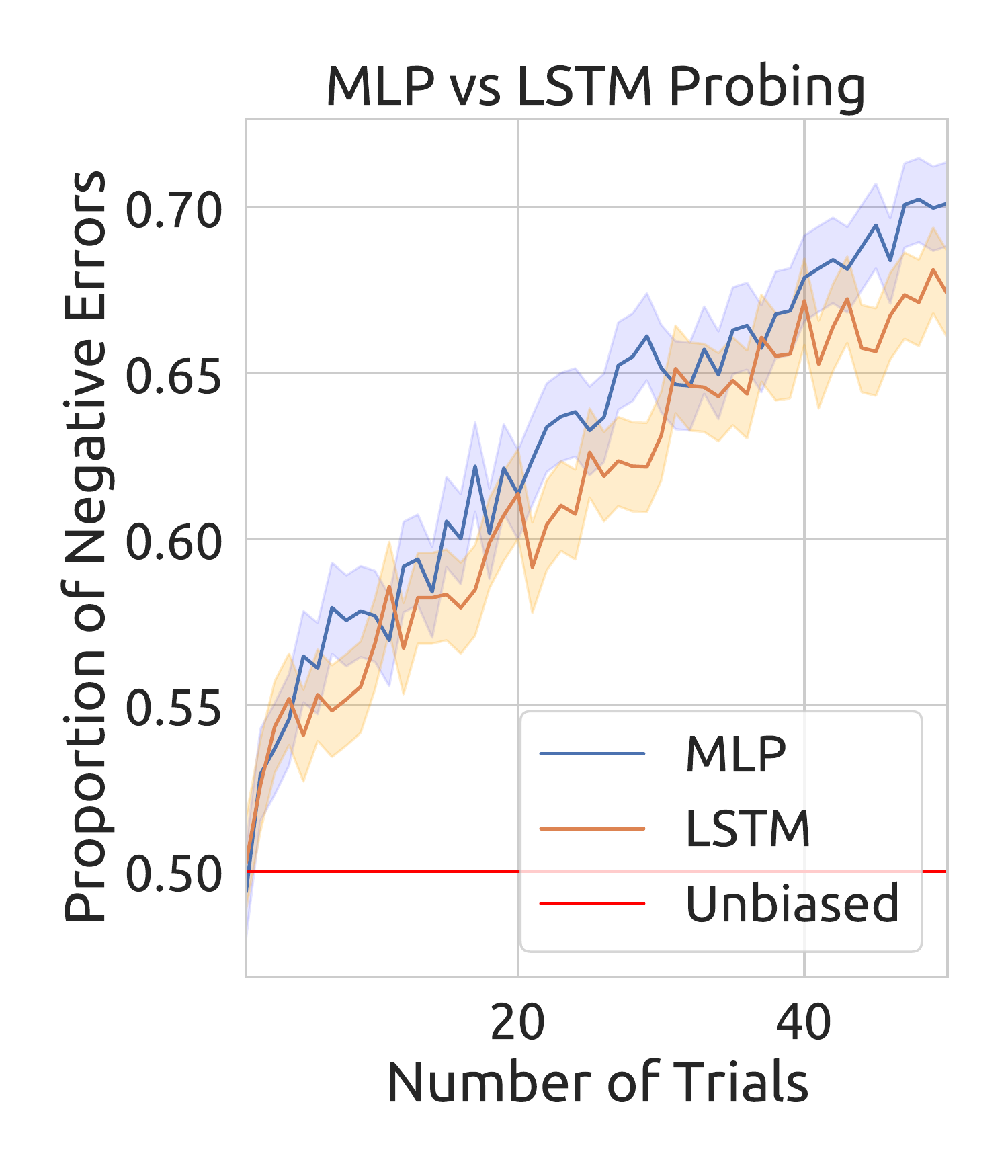}
    \includegraphics[scale=0.25,trim={0.45cm 0 0.45cm 0},clip]{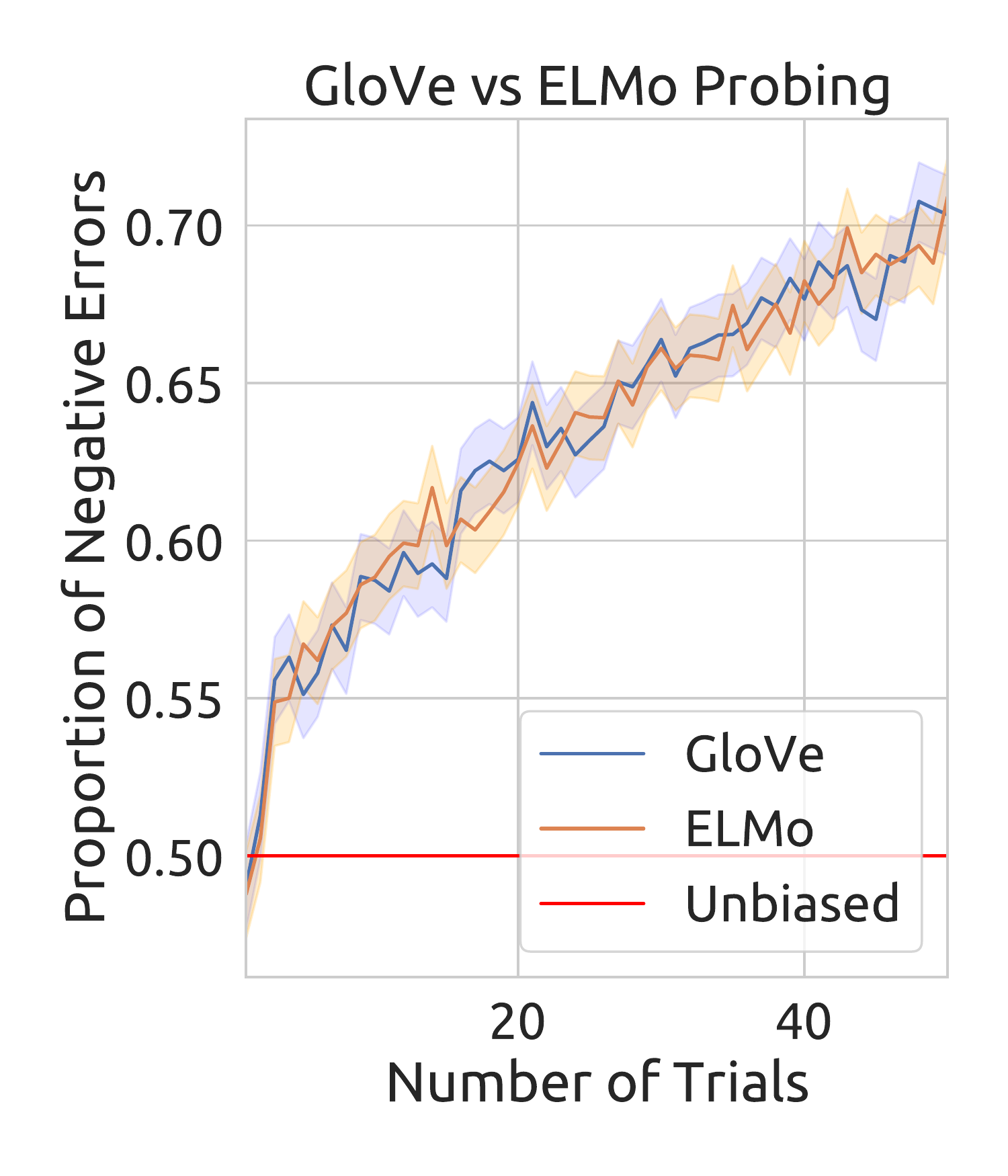}
    \caption{The false conclusion probing experiment results, along with Clopper--Pearson 95\% CIs.}
    \label{fig:exp1}
\end{figure}
\begin{figure}[t]
    \centering
    \includegraphics[scale=0.25,trim={0.45cm 0 0.45cm 0},clip]{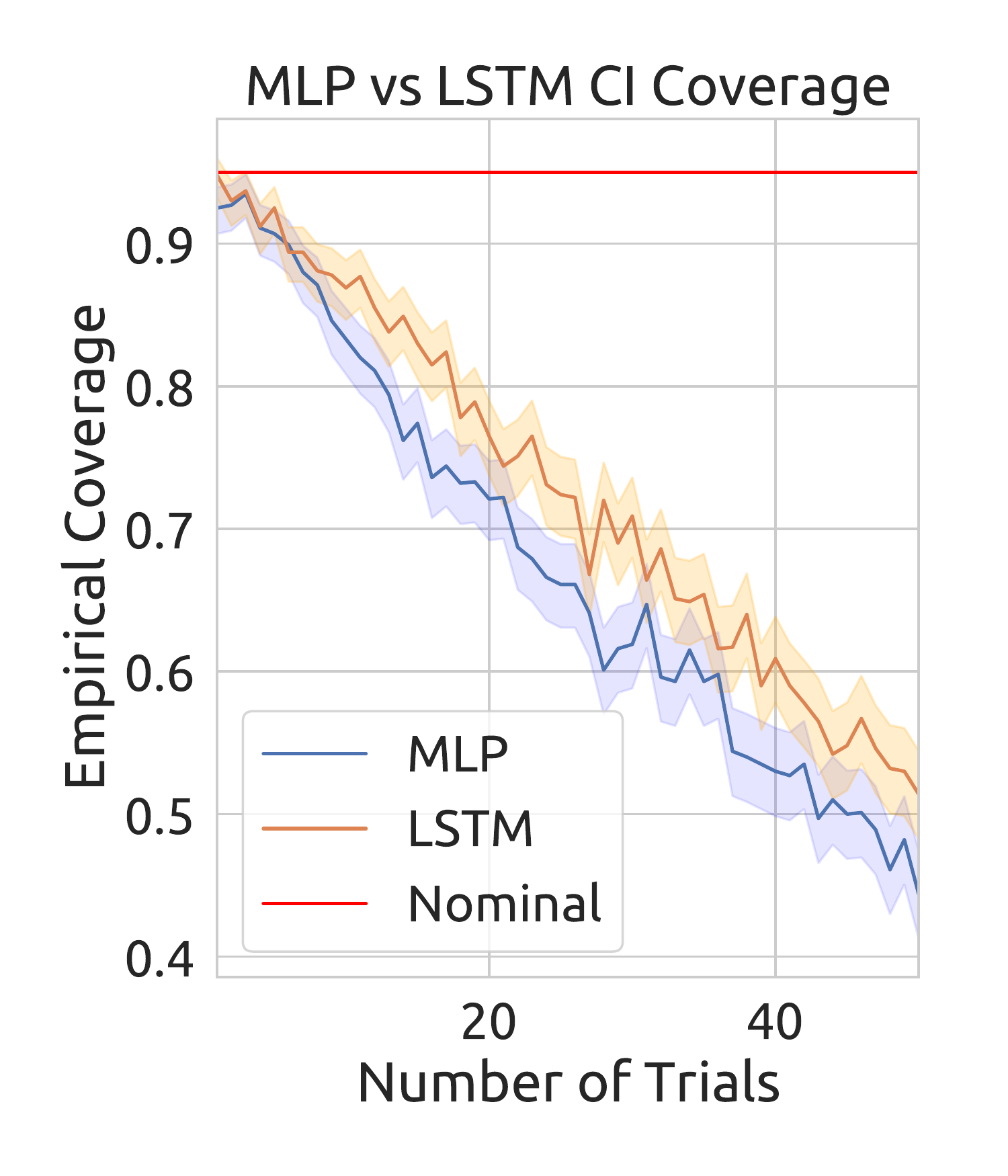}
    \includegraphics[scale=0.25,trim={0.45cm 0 0.45cm 0},clip]{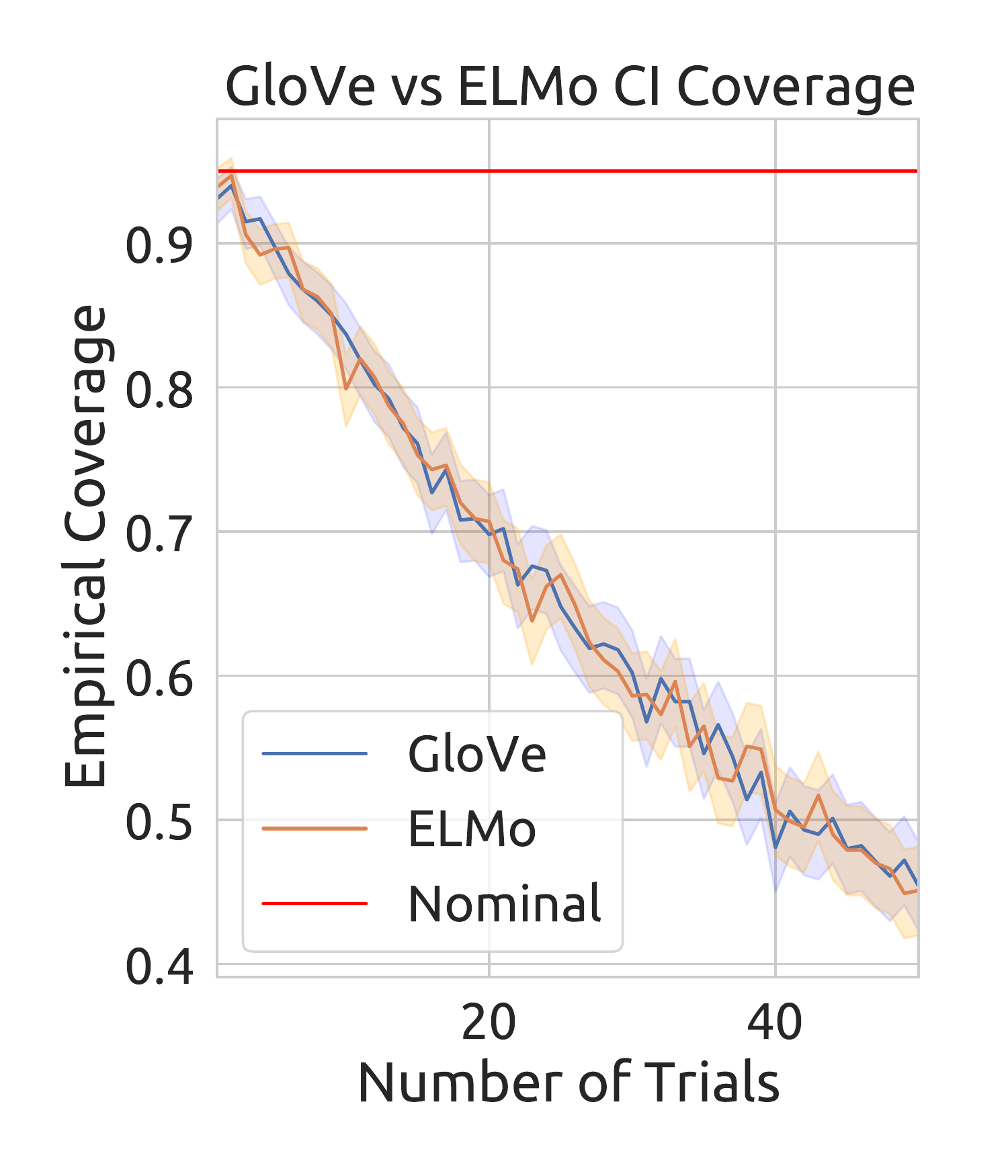}
    \caption{The CI coverage experiment results, along with Clopper--Pearson 95\% CIs.}
    \label{fig:exp2}
\end{figure}

\parheader{False conclusions probing}
Figure~\ref{fig:exp1} shows the results of our false conclusion probing experiment.
We find that the estimator quickly prefers negative errors as $n$ increases.
The curves are mostly similar for both tasks, except the MLP fares worse.
This requires further analysis, though we conjecture that the reason is lower estimator variance, which would result in more consistent errors.

\parheader{CI coverage}
We present the results of the CI coverage experiment results in Figure~\ref{fig:exp2}.
We find that the bootstrapped confidence intervals quickly fail to contain the true parameter at the nominal coverage rate of $0.95$, decreasing to an ECP of $0.7$ by $n=20$.
Since the underlying ECDF is the same, this result extends to \citet{lucic2018gans}, who construct CIs for the expected maximum.

\section{Conclusions}
In this work, we provide a dual-pronged theoretical and empirical analysis of \citet{dodge2019show}.
We find unspoken caveats in their work---namely, that the estimator is statistically biased under weak conditions and uses an ECDF assumption that is subject to large errors.
We empirically study its practical effects on tasks in document classification and sentiment analysis.
We demonstrate that it prefers negative errors and that bootstrapping leads to poorly controlled confidence intervals.

\section*{Acknowledgments}
This research was supported by the Natural Sciences and Engineering Research Council (NSERC) of Canada.

\bibliography{main-sigtest}
\bibliographystyle{acl_natbib}
\begin{table*}[t]
    \centering
    \scriptsize
    \setlength{\tabcolsep}{1pt}
    \begin{tabular}{lcccccccccc}
    \toprule[1pt]
         Model & Mode & Batch Size & Learning Rate & Seed & Dropout & \# Layers & Hidden Dim. & WDrop & EDrop & $\beta_\text{EMA}$\\
         \midrule
         MLP & -- & $(16, 32, 64)$ & $0.001$ & $[0,10^7]_D$ & $[0.05,0.7]$ & 1 & $[256, 768]_D$ & -- & -- & -- \\
         \multirow{2}{*}{LSTM} & \multirow{2}{*}{\shortstack{$(\textrm{nonstatic}[0.5],$\\$\textrm{static}[0.4],\textrm{rand}[0.1])$}} & \multirow{2}{*}{$(16,32,64)$} & \multirow{2}{*}{$\textrm{TExp}[0.001,0.099]$} & \multirow{2}{*}{$[0,10^7]_D$} & \multirow{2}{*}{$[0.05,0.7]$} & \multirow{2}{*}{$(1[0.75],2[0.25])$} &
         \multirow{2}{*}{$[384,768]_D$} &
         \multirow{2}{*}{$[0, 0.3]$} &
         \multirow{2}{*}{$[0,0.3]$} &
         \multirow{2}{*}{$[0.985,0.995]$}\\
         \\
    \bottomrule[1pt]
    \end{tabular}\vspace{2mm}
    \caption{Hyperparameter random search bounds. $[\cdot,\cdot]_D$ indicates a discrete uniform range, while $[\cdot,\cdot]$ continuous uniform. \textsc{TExp}$[\cdot, \cdot]$ denotes the truncated exponential distribution. Tuples represent categorical distributions, uniform by default. WDrop and EDrop denote weight and embed dropout. For the GloVe- and ELMo-based search bounds, see \url{https://github.com/allenai/show-your-work}.}
    \label{tab:hyperparams}
\end{table*}
\newpage
\appendix
\section{Cautionary Notes}

We caution that the estimator described in \textit{the text} of Dodge et al. is $\hat V_n^n$.
This is clear from their equation (7) where the empirical distribution is defined over the first $n$ samples, instead of the $B$ samples that we use here.
In other words, they claim, at least in the text, to use $\hat F_n$ instead of $\hat F_B$ for their estimator $\hat V_n^n$.
Clearly, the estimator $\hat V_n^n$ is (much) worse than $\hat V_n^B$ since the latter exploits all $B$ samples while the former only looks at the first $n$ samples.
However, close examination of their codebase\footnote{\url{https://github.com/allenai/allentune}} reveals that they use $\hat{V}^B_n$, so the paper discrepancy is a simple notation error.

Lastly, we mention that our notation for $\hat U_n^B$ and $\hat V_n^B$ is motivated by the fact that the former is a $U$-statistic while the latter is a $V$-statistic. The relation between the two has been heavily studied in statistics since Hoeffding's seminar work. For us, it suffices to point out that $\hat V_n^B \leq \hat U_n^B$, with the latter being unbiased while the former is only asymptotically unbiased.
The difference between the two is more pronounced when $n$ is close to $B$.
We note that $\hat U_n^B$ can be computed by a reasonable approximation of the binomial coefficients, using say Stirling's formula.

\section{Proof of Theorem 2}
\begin{theorem}
If the sample does not contain the population maximum, $\KS(\hat{F}^n_B, F^n) \rightarrow 1$ exponentially quickly as $n$ and $B$ increase.
\end{theorem}
	\begin{proof}
	Suppose $v^*$ is not in the sample $v_1, \dots, v_B$, where $v_1 \leq \cdots \leq v_B < v^*$.
Then
\begin{align*}
    \sup_{x\in \mathbb{R}} |\hat{F}^n_B(x) - F^n(x)| &\geq |\hat{F}^n_B(v_B) - F^n(v_B)|.
\end{align*}
From Equation 2.1, $\hat{F}^n_B(v_B) = (\hat{F}_B(v_B))^n = 1 > (F(v_B))^n = F^n(v_B)$, hence
\begin{equation*}
    |\hat{F}^n_B(v_B) - F^n(v_B)| = 1 - (F(v_B))^n.
\end{equation*}
Thus concluding the proof.
\end{proof}

\section{Experimental Settings}
\begin{table}[t]
    \centering
    \setlength{\tabcolsep}{2pt}
    \begin{tabular}{lcccc}
    \toprule[1pt]
         Model & \# Runs & Bandwidth & Support & Bins\\
         \midrule
         MLP & $145$ & $0.0049$ & $[0.72, 0.82]$ & 511\\
         LSTM & $152$ & $0.059$ & $[-0.18, 1.08]$ & 511\\
         GloVe & $114$ & $0.018$ & $[0.46, 0.97]$ & 511\\
         ELMo & $84$ & $0.041$ & $[0.39, 0.99]$ & 511\\
    \bottomrule[1pt]
    \end{tabular}
    \caption{Model kernel parameters. Bandwidth chosen using Scott's normal reference rule. Bins denote the number of discretized slots.}
    \label{tab:kparams}
\end{table}
\begin{figure}[t]
    \centering
    \includegraphics[scale=0.23]{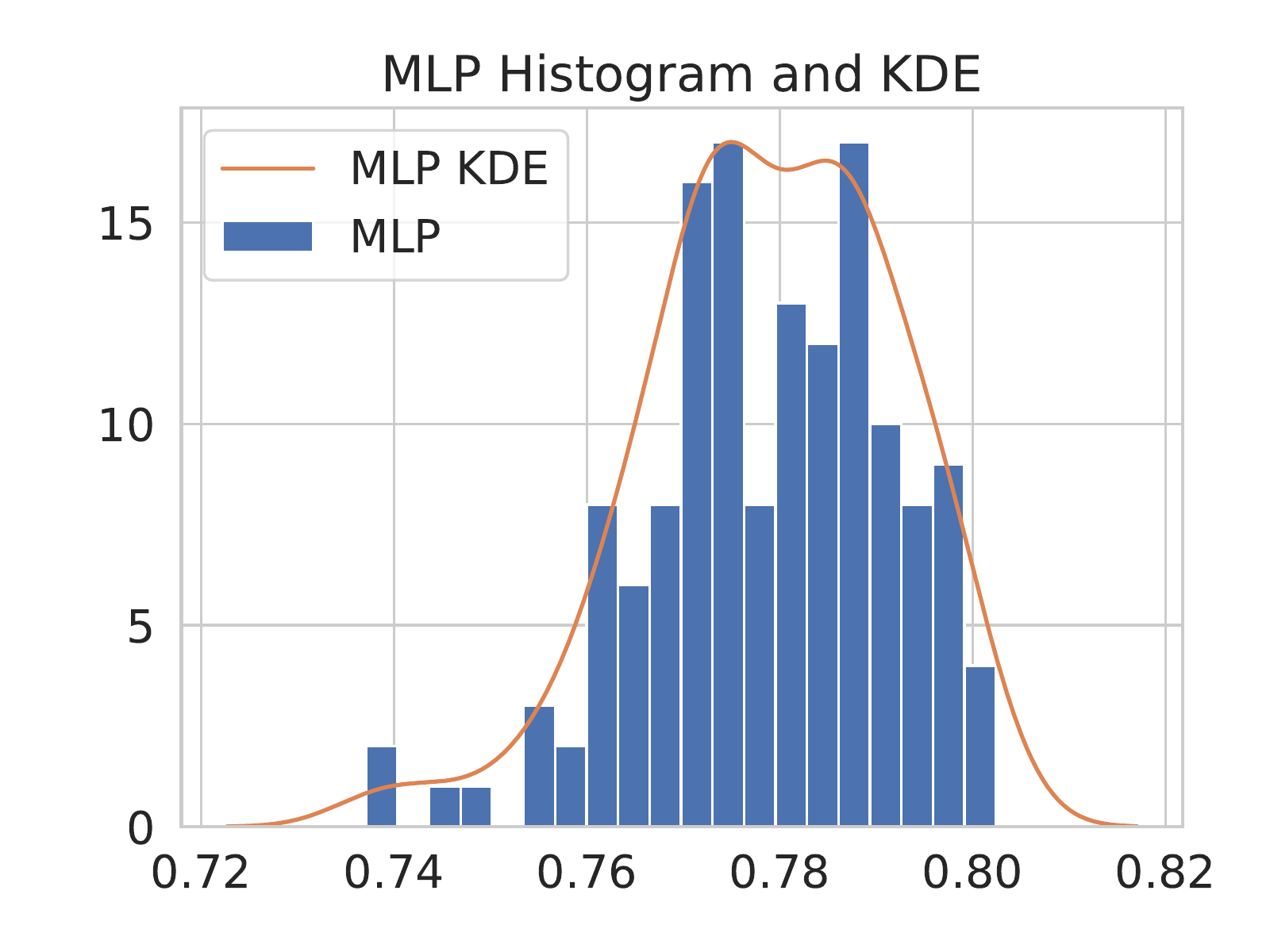}
    \includegraphics[scale=0.23]{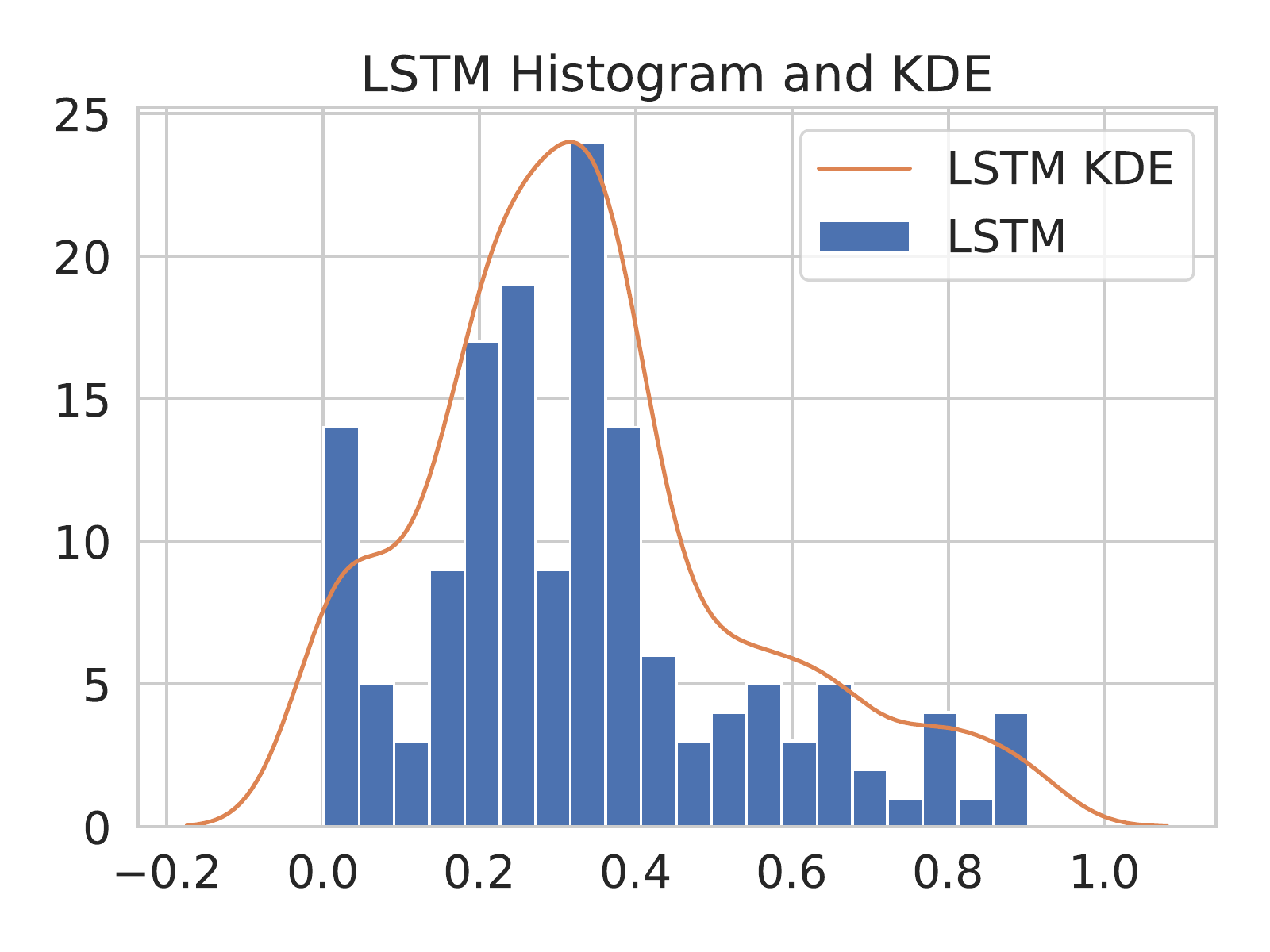}
    \includegraphics[scale=0.23]{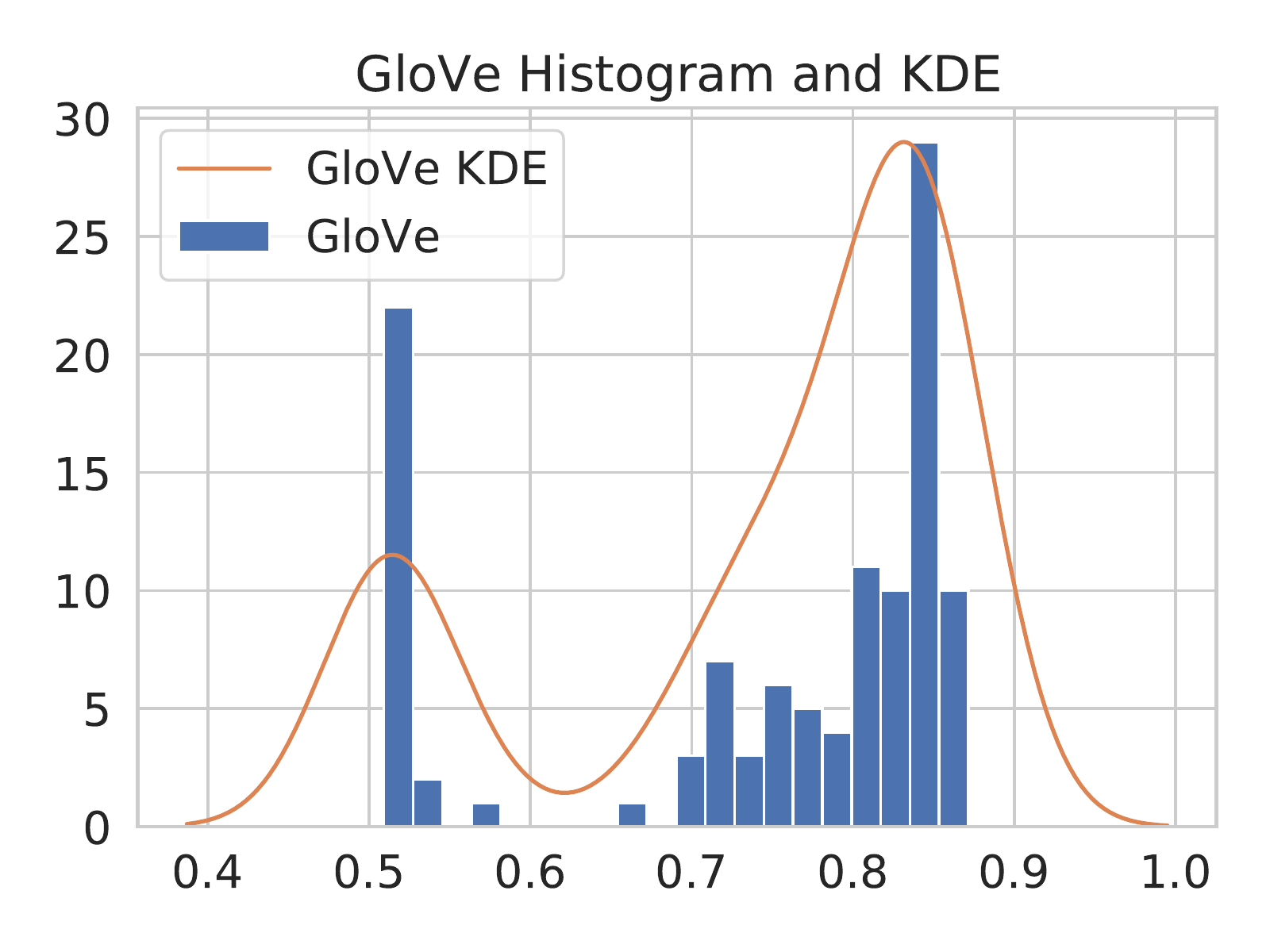}
    \includegraphics[scale=0.23]{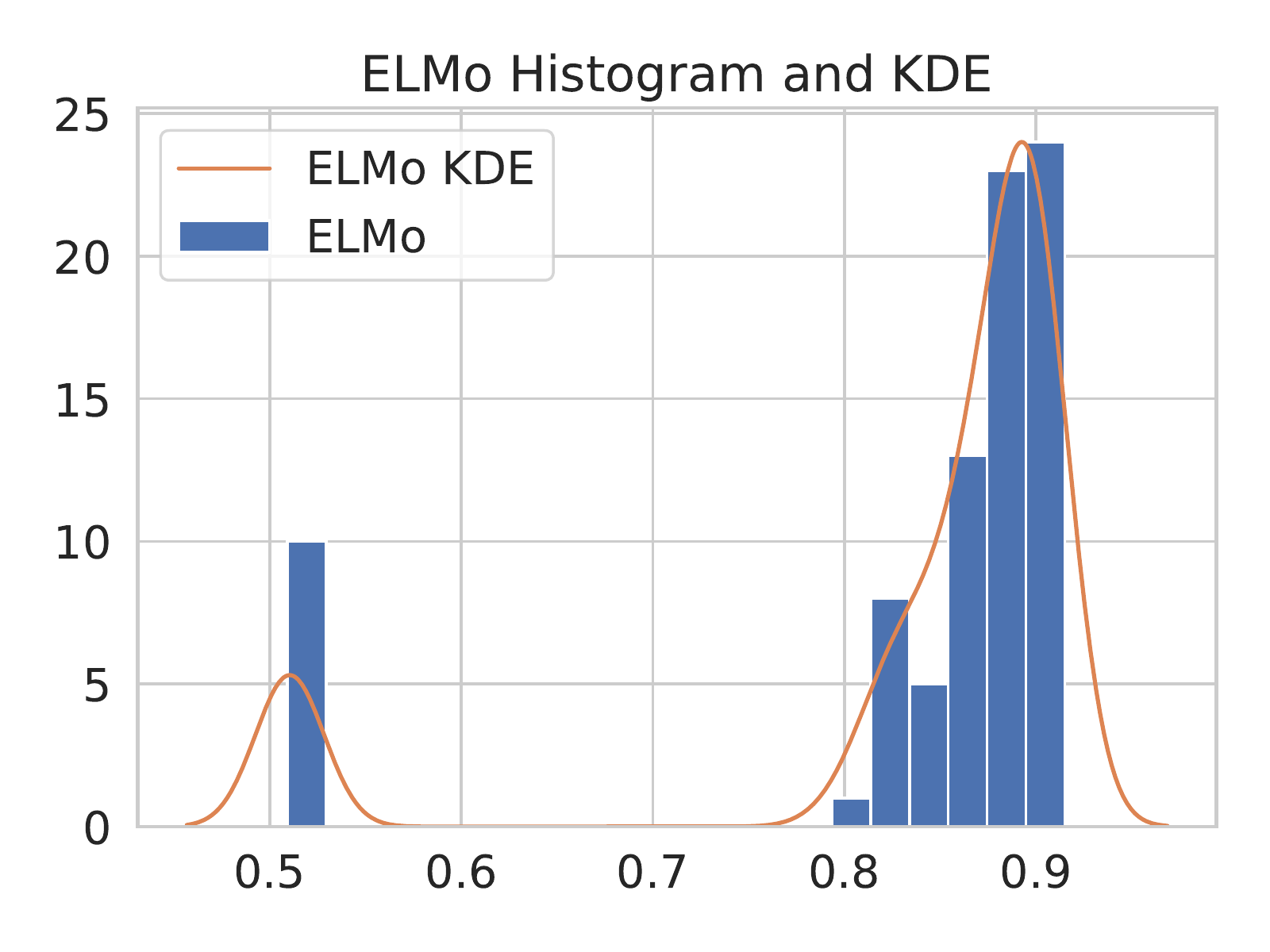}
    \caption{Gaussian kernel density estimators fitted to each model's results, along with the histograms of the original runs.}
    \label{fig:hist}
\end{figure}

We present hyperparameters in Tables~\ref{tab:hyperparams} and \ref{tab:kparams} and Figure~\ref{fig:hist}.
We conduct all GloVe and ELMo experiments using PyTorch 1.3.0 with CUDA 10.0 and cuDNN 7.6.3, running on NVIDIA Titan RTX, Titan V, and RTX 2080 Ti graphics accelerators. Our MLP and LSTM experiments use PyTorch 0.4.1 with CUDA 9.2 and cuDNN 7.1.4, running on RTX 2080 Ti's. We use Hedwig\footnote{\url{https://github.com/castorini/hedwig}} for the document classification experiments and the Show Your Work codebase (see link in Table~\ref{tab:hyperparams}) for the sentiment classification ones.
	
\end{document}